
\documentclass[letterpaper, 10 pt, conference]{ieeeconf}  

\IEEEoverridecommandlockouts                              

\overrideIEEEmargins                                      



\usepackage[bookmarks=true]{hyperref}
\usepackage{color}
\usepackage{cite}
\usepackage{notoccite}

\usepackage{graphicx}
\def\argmin{\mathop{\arg\min}}	%
\usepackage{bm}
\usepackage{tabularx}
\usepackage{floatrow}
\usepackage[utf8]{inputenc}
\usepackage{amsmath}
\usepackage[utf8]{inputenc} 
\usepackage[T1]{fontenc}    
\usepackage[mathscr]{eucal}
\usepackage{caption}
\usepackage{amsfonts}
\usepackage{dblfloatfix} 
\usepackage{epsfig}
\usepackage{makecell}
\usepackage{multirow}
\usepackage{hyperref}
\usepackage{dsfont}
\usepackage{breqn}
\usepackage{amssymb}
\usepackage{amsmath,amsthm}
\usepackage{amsfonts}       
\usepackage{nicefrac}       
\newtheorem{theorem}{Theorem}
\usepackage{amsthm}
\usepackage{bbm}
\newtheorem{problem}{Problem}

\input{mysymbol.sty}

\usepackage{mathtools,algorithm}
\usepackage[noend]{algpseudocode}
\title{\LARGE \bf Graph Neural Networks for Motion Planning
}

\author{Arbaaz Khan$^{1,2}$, Alejandro Ribeiro$^{1}$, Vijay Kumar$^{1}$, Anthony Francis$^{2}$ 
\thanks{$^{1}$GRASP Lab, University of Pennsylvania, $^{2}$Google Brain. 
        {\tt\small arbaazk@seas.upenn.edu}}%
}

\begin{document}

\maketitle
\thispagestyle{empty}
\pagestyle{empty}

\begin{abstract}
This paper investigates the feasibility of using Graph Neural Networks (GNNs) for classical motion planning problems.
We propose guiding both continuous and discrete planning algorithms using GNNs' ability to robustly encode the topology of the planning space using a property called permutation invariance.
We present two techniques, GNNs over dense fixed graphs for low-dimensional problems and sampling-based GNNs for high-dimensional problems.
We examine the ability of a GNN to tackle planning problems 
such as identifying critical nodes or 
learning the sampling distribution in Rapidly-exploring Random Trees (RRT). 
Experiments with critical sampling, a pendulum and a six DoF robot arm show
GNNs improve on traditional analytic methods as well as learning approaches using fully-connected or convolutional neural networks. 
\end{abstract}

\section{Introduction}
\label{sec:Sec1}

 Motion planning is a widely studied problem with applications in robotics, computer graphics and medicine \cite{latombe1999motion}. Early planning methods such as Djikstra's search a discretized version of space, but the states required explode exponentially as the dimensionality of the space increases. $\text{A}^*$ looks to improve upon Djikstra's by informing the search with a heuristic \cite{russell2002artificial}, but heuristic design is a challenging problem \cite{bonet2001planning}. 
Further, in scenarios such as robotics, naive discretization can violate kinodynamic constraints. Sampling-based Planners (SBPs) such as Rapidly-exploring Random Trees (RRTs) tackle these issues by approximating the
topology of the configuration space (C-space), i.e the space of
all possible agent configurations. These methods sample
points in C-space and connect these points in a graph or tree if a collision free trajectory is feasible \cite{lavalle2006planning}. 

One caveat of SBPs is that the samples needed to cover a space increase with C-space dimensionality \cite{hsu1997path}.
Researchers have tackled this issue using deep learning to identify samples in C-space which are in some sense more important.
For example, for a robot navigating an office, samples in narrow corridors are more important than in free space. 
Ichter et al. \cite{ichter2018learning} use the latent space of a conditional variational autoencoder (CVAE) to bias SBP sampling towards critical samples. RL-RRT \cite{chiang2019rl} use deep reinforcement learning (RL) to bias tree-growth towards promising regions of the C-space. \cite{zhang2018learning} use RL to learn an implicit sampling distribution to reduce samples required. Critical PRMs \cite{ichter2019learned} directly learn these critical samples while LEGO \cite{kumar2019lego} learn critical samples for graph search algorithms such as $\text{A}^*$. 

A common theme of the above work is the use of convolutional neural networks (CNNs) or fully-connected networks (FCNs) to learn about the planning space. In this paper, we argue  most planning spaces have rich topological structure which may not lie on a two dimensional lattice. When using a CNN or FCN, information about this structure is lost, and we show results that demonstrate significant changes to the underlying structure of the environment can result in poor generalization of methods that use CNNs and FCNs. 
\begin{figure}[t!]
  \centering
  \includegraphics[width=\linewidth]{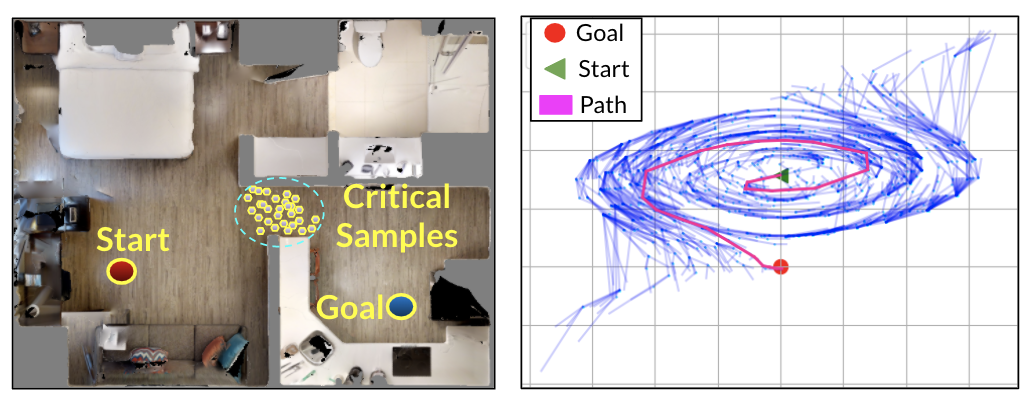}
  \caption{\textbf{Graph Neural Networks for Motion Planning.} \textbf{a)} Identifying critical samples for a given start and goal. \textbf{b)} GNN-based sampler guides RRT tree-growth for a pendulum.
  \label{fig:mainfig}}
\end{figure}

Graph Neural Networks (GNNs) have
successfully tackled problems with rich graphical structure
such as text classification \cite{kipf2016semi}, protein interface prediction \cite{fout2017protein}, parsing social relationships \cite{wang2018deep}, and large-scale multi-agent reinforcement learning \cite{khan2019graph}, \cite{khan2019graph1}. For planning problems where C-space topology is an irregular graph, we hypothesize GNNs offer better solutions than existing learning methods or naive heuristics because of a property called \textit{permutation invariance}. Gama et al. \cite{gama2019stability} have shown that GNNs are invariant to graph permutations; in this paper, we show permutation invariance captures an equivalence class of motion planning problems. We hypothesize this will enable GNN-based planners to transfer well to new unseen environments. This paper proposes GNN solutions to these motion planning problems:\\
\textbf{Identifying Critical Samples} Given a static dense graph that covers the C-space of interest, a GNN is trained to identify critical nodes relevant to a given planning problem. The quality of critical samples computed by the GNN architectures is compared against those produced by CNNs, or by sampling the latent space of a (CVAE) as in \cite{kumar2019lego}.\\
     \textbf{Sampling with Graph Neural Networks} A GNN is used to compute the parameters of the sampling distribution from which the samples are drawn to expand the graph in a SBP. We show a GNN based sampler guides the search tree towards the goal node faster than a uniform sampler. We also show the GNN sampler can handle higher-dimensional problems by planning for a robot arm with six degrees of freedom.  
We experimentally show GNNs outperform existing solutions for machine learning in motion planning. 

%
\section{Preliminaries}
Here we outline notation for planning problems along with a brief overview of GNNs. 
Let $\mathcal{X}\in\mathbb{R}^d$ represent a $d$-dimensional C-space. $\mathcal{X}_{\text{obs}}$ denotes the part of the space occupied by obstacles, and free space is $\mathcal{X}_{\text{free}}=\mathcal{X}\setminus \mathcal{X}_{\text{obs}}$. Let the initial condition be $x_{\text{init}}
\in \mathcal{X}_{\text{free}}$ 
and the goal condition given as $x_g \in \mathcal{X}_{\text{free}}$ 
The path $\mathbf{p}$ is said to be feasible if it is collision free $\mathbf{p}(\zeta) \in \ccalX_{\text{free}}$ $\forall$ $\zeta \in [0,1]$ and if  $\mathbf{p}(0)=x_{\text{init}}$,  $\mathbf{p}(1)=x_{\text{goal}}$. 
Let there also exist a cost function $c(p)$ that maps the path $\mathbf{p}$ to a bounded cost $[0,c_{\text{max}}]$.
A motion planning problem $\bbM$ is represented as the tuple, $\bbM=$ $\{\mathcal{X}_{\text{free}},x_{\text{init}},x_{\text{goal}}\}$
Later, we build on this notion of the planning problem to achieve different objectives. 

\subsection{Graph Neural Networks}
GNNs can be seen as generalizations of CNNs to general graphs by employing convolutional graph filters \cite{gama2018convolutional}. Consider a graph $\mathcal{G}=(\mathbf{V},\mathbf{E})$ described by a set of $N$ nodes denoted as $\mathbf{V}$, and a set of edges denoted by $\mathbf{E} \subseteq \mathbf{V} \times \mathbf{V}$. Let this graph act as support for data $\mathbf{x}=[\mathbf{x}_1,\ldots,\mathbf{x}_{N}]\in\mathbb{R}^{N \times m}$. The relationship between $\mathbf{x}$ and $\mathcal{G}$ can be completely characterized by a matrix $\mathbf{S}$ called the graph shift operator. The elements of $\mathbf{S}$ given as $s_{ij}$ respect the sparsity of the graph, i.e $s_{ij} = 0$, $\forall$ $i\neq j \text{ and } (i,j) \notin \mathbf{E}$. Examples for $\mathbf{S}$ are the adjacency matrix, the graph laplacian, and the random walk matrix. $\mathbf{S}$ can be used to define the map $\mathbf{y} = \mathbf{S}\mathbf{x}$. If the set of neighbors of node $n$ is given by $\mathfrak{B}_n$ then 
the operation $[\mathbf{S}\mathbf{x}]_n =\sum_{j=n,j\in \mathfrak{B}_n}s_{nj}\mathbf{x}_n$ performs a simple aggregation of data at node $n$ from its neighbors that are one hop away. Recursively, one can access information from nodes located further away. For example, $\mathbf{S}^k\mathbf{x} = \mathbf{S}(\mathbf{S}^{k-1}\mathbf{x})$ aggregates information at each node from its $k$-hop neighbors. Using this map, the spectral $K$-localized graph convolution is defined as:
\begin{equation}\label{eq:z}
    \mathbf{z} = \sum_{k=0}^{K} h_k \mathbf{S}^k \mathbf{x} = \mathbf{H(S)x}
\end{equation}
where $\mathbf{H(S)} = \sum_{k=0}^{\infty} h_k \mathbf{S}^k$ is a linear shift invariant graph filter \cite{segarra2017optimal} with coefficients $h_k$. In practice, the output of a graph convolutional filter is followed by a nonlinearity $\sigma$ to produce $\bby$. Thus, the output at the first layer is given as:
\begin{equation}\label{eqn_layer_1}
   \bby_1 = \sigma\Big[\ \bbz_1 \ \Big] 
             = \sigma\bigg[ \sum_{k=0}^{K} h_{1k} \mathbf{S}^k \mathbf{x}\bigg].
\end{equation}
where $h_{1k}$ are the filter coefficients of the first layer. 
In general, there are a total of $L$ layers each of which is  which produces output the $\bby_L$ according to the recursion 
\begin{equation}\label{eqn_gnn_recursion}
   \bby_L = \sigma\Big[\ \bbz_L \ \Big] 
             = \sigma\bigg[  \sum_{k=0}^{K} h_{Lk} \mathbf{S}^k \mathbf{y}_{L-1} \bigg].
\end{equation}
Eqns \ref{eqn_layer_1} and \ref{eqn_gnn_recursion} outline the main graph convolution operations. Several variants of GNNs build on these basic graph convolutions, including the Graph Attention Transformers (GATs) \cite{velivckovic2017graph} we employ in this paper.

%

\section{Learning Critical Samples with GNNs}
\label{sec:learningcriticalsamples}
Consider a graph $\ccalG=(\bbV,\bbE,\bbW)$ covering the $d$-dimensional configuration space $\ccalX \in \mathbb{R}^d$. The set of vertices $\bbV$ represents a collection of points in $\mathbb{R}^d$, $\bbE$ represent edges between these points and
edge weights between node $u$ and $v$ given by $w_{uv} \in \bbW$ is the cost of traversing the edge. Each node $n$ is equipped with a feature representation $\mathbf{x}_n \in \mathbb{R}^m$. For example, in a two-dimensional planning problem, vertices are randomly sampled points in the space with edge connections based on some $k$-nearest neighbor rule and  node $n$'s features can consist of its own $xy$ position, $x_{\text{init}}$ and $x_{\text{goal}}$. Feature choice is arbitrary and can be adapted to suit the problem. For the full graph, the feature vector is given as $\bbx=[\bbx_1,\ldots,\bbx_N]$ where $N$ is the cardinality of the graph. In this section, we are interested in using a graph that covers the configuration space $\mathbb{R}^{d}$ in a uniform manner.  \cite{janson2018deterministic} shows a non-lattice, low dispersion sampling scheme such as a Halton sequence is ideal for uniform coverage, so we choose $\ccalG$ to be a $r$-disc Halton graph. To uniformly cover $\mathbb{R}^d$, $\ccalG$ must be sufficiently large/dense. Note $\ccalG$ remains \textbf{\textit{constant}} even if the planning problem is changed, i.e all planning problems in $\mathbb{R}^d$ can be represented on $\ccalG$.

Given a graph $\ccalG$ and planning problem $\bbM=$ $\{\mathcal{X}_{\text{free}},x_{\text{init}},x_{\text{goal}}\}$, let there exist a graph search algorithm $\bbA(\ccalG,\bbM)$ that finds a path $\mathbf{p}$ on $\ccalG$ that is feasible and has the lowest cost if one exists. However, it is often expensive to run the graph search algorithm $\bbA$ on a dense graph such as $\ccalG$. To overcome this, we hypothesize that the complexity of a given planning problem can be reduced by identifying only those $\hat{\bby}$ nodes in $\mathcal{G}$ that are relevant or \textit{critical}. Critical nodes  can be composed with a sparse graph $\ccalG_s$ which is easier to planon $\bbA$ than the original dense graph $\ccalG$. Defining which nodes are \textit{critical} is a challenging research problem \cite{ichter2019learned}. In this work, we use the Bottleneck Node algorithm proposed in \cite{kumar2019lego} (Alg 1) to generate ground truth critical nodes required for training data due to its guarantees to generate highest cost nodes ("critical") along the shortest path. A drawback of the BN algorithm is that it needs the shortest path as input. These ground truth samples are denoted as $\bby_{g}$ Thus, the critical sample problem considered in this paper is:
\begin{problem}
\label{prob:prob1}
Given motion planning problem $\bbM$ and constant graph $\ccalG$ as support for feature vector $\bbx$, compute critical nodes $\hat{\bby}:=\pi_{\theta}(\ccalG,\bbx,\bbM)$ where $\pi$
is parameterized by $\theta$:
\begin{equation}
   \theta^*= \argmin_{\theta} ||\hat{\bby} - \bby_{g} ||
\end{equation}
\end{problem}

We make one approximation to Problem \ref{prob:prob1}. Instead of predicting critical nodes in the graph, we predict states in the d-dimensional configuration space, $\hat{\bby} \in \ccalX$. This reduces the problem to regression instead of the more complex structured prediction problem over a large graph. A similar strategy for identifying critical nodes is used in \cite{ichter2019learned,kumar2019lego}.

\begin{figure}[bt]
  \centering
  \includegraphics[scale=0.19]{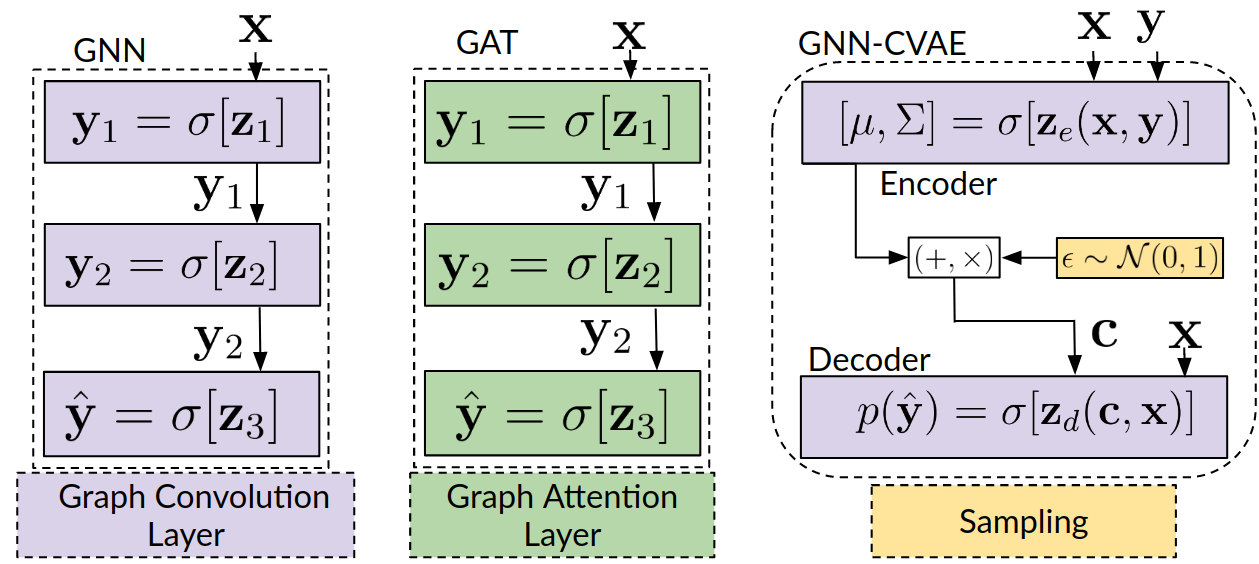}
  \caption{\textbf{GNN Architectures for Motion Planning.} \textbf{a)} GNN models layer graph convolutions; the last layer outputs $\hat{\mathbf{y}}$.  \textbf{b)} GAT models layer attention weighted graph convolutions \cite{velivckovic2017graph} and also output $\hat{\mathbf{y}}$. \textbf{c)} \textbf{GNN-CVAE} models use graph convolutions in the encoder and decoder of a conditional variational auto-encoder and output a distribution for $\hat{\mathbf{y}}$.} 
\label{fig:gnnarch2}
\end{figure}

\subsection{Graph Neural Network Architectures}
\label{subsec:gnnarchs}
We propose three architectures (Fig \ref{fig:gnnarch2}) to study the abilities of GNNs to identify these critical samples. The first one consists of a simple feedforward graph neural network (Fig \ref{fig:gnnarch2}a) and we call this GNN. Here, the input to the GNN is a static graph $\mathcal{G}$ that has enough nodes to cover $\mathbb{R}^d$ and features associated with each of those nodes. The graph convolutional filters aggregate information at each node from their $K$-nearest neighbors as given in Eqn \ref{eqn_layer_1}. Each extra graph convolutional layer provides information from neighbors multiple hops away. For example, in a 2 layer GNN, node $n$ aggregates information from its own neighbors as well as indirect information about its neighbors' neighbors. The GNN architecture also consists of a graph maxpool at the end followed by a FCN layer which predicts one critical state $\hat{\mathbf{y}}$. 

The second architecture investigated is the Graph Attention Transformer or GAT \cite{velivckovic2017graph} (Fig \ref{fig:gnnarch2}b). GATs are similar to GNNs in their use of graph convolutional filters, but instead of aggregating information over all neighbors as in Eqn. \ref{eqn_layer_1}, information from node $n$'s neighbors is scaled by a learned attention weight. We hypothesize that when planning on dense graphs GATs might benefit from attention to focus only on nodes that are critical. Selectively attending to neighbors requires additional parameters to compute attention weights; therefore, GATs can be slower in training and inference. The GAT output $\hat{\mathbf{y}}$ represents a single point in C-space.

Finally, we investigate a modified conditional variational auto-encoder \cite{sohn2015learning} we call GNN-CVAE, consisting of an encoder and decoder. The encoder consists of GNN layers capped by maxpool and FCN layers. The encoder's input is ground truth critical nodes $\mathbf{y}$ and the graph $\mathcal{G}$. The encoder's conditioning variable is the feature vector $\mathbf{x}$ representing parameters of the planning problem at each node. Denoting the latent variable $\tau \in\mathbb{R}^P$, the GNN layers of the encoder map $(\mathbf{y},\mathbf{x})$ to parameters $\phi=[\mu,\Sigma]$ of a Gaussian distribution in latent space given by $q_{\phi}(\tau|\mathbf{x,y},\mathcal{G})$. The target distribution is fixed to the isotropic normal distribution $\mathcal{N}(0,I)$. The decoder has a similar architecture and maps a sample from $\mathcal{N}(0,I)$ conditioned on a planning problem's parameters $\mathbf{x}$ and graph $\mathcal{G}$ to the distribution of the critical nodes $p_{\Theta}(\mathbf{\hat{y}}|\tau,\mathbf{x},\mathcal{G})$ parametrized by $\Theta$. The encoder is used to train the decoder by minimizing the approximate variational lower bound or the Evidence Lower Bound (ELBO):
\begin{equation}
\label{eq:elbo}
    -D_{KL}(q_{\phi}(\tau|\mathbf{x,y},\mathcal{G})||\mathcal{N}(0,I)) + \frac{1}{P} \sum_{i=1}^P \log p_{\Theta}(\hat{\mathbf{y}}|\tau^{(i)},\mathbf{x},\mathcal{G})
\end{equation}
During inference, the decoder predicts a distribution of critical nodes when samples are drawn from $\mathcal{N}(0,I)$; this  improves over the GNN and GAT models by predicting a probability distribution instead of a single point, which is often more desirable. For example, understanding the distribution of free space points inside the narrow corridor in Fig \ref{fig:bnresultsfig} may be critical for real-world planning. We compare these GNN based models against a CNN variant where all the GNN layers are replaced by CNNs and a CVAE which has a mix of CNN/FCN layers depending on the problem.

\begin{figure}[bt]
    \includegraphics[scale = 0.55]{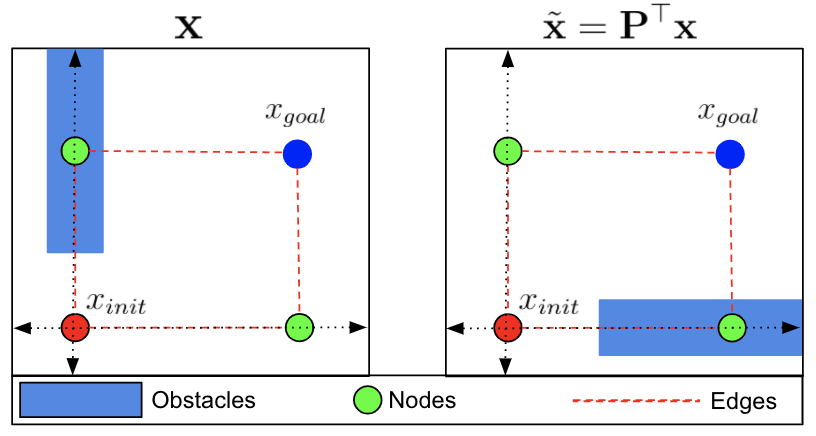}
  \caption{\textbf{Permutation Invariance} A class of motion planning problems can be represented on the same static graph by permuting feature vector $\mathbf{x}.$ }\label{fig:invarfig}
\end{figure}

\subsection{Permutation Invariance}
Here we prove the GNN parametrization for $\pi_{\theta}(\ccalG,\bbx,\bbM)$ for the motion planning problem offers advantages when generalizing to motion planning problems not seen in training. 

Let the adjacency matrix of the graph $\mathcal{G}$ spanning the $d$-dimensional configuration space $\mathbb{R}^d$ be denoted by $\mathbf{S} \in \mathbb{R}^{N \times N}$ Entries of this matrix $S_{nm}$ are binary and are 1 only if the position $x_m \in \mathbb{R}^d$ represented by node $m$ is among the $k$ nearest neighboring nodes to the position $x_n \in \mathbb{R}^d$ represented by $n$,  based on some norm $||.||$ on $\mathbb{R}^d$.
%
%
Further, for a given motion planning problem $\mathbf{M} = \{\mathcal{X}_{free},x_{init},x_{goal} \}$ let the feature vector $\mathbf{x}$  for which the graph $\mathcal{G}$ is the support be defined as:
\begin{equation}
\label{eq:featurestate}
\mathbf{x}=
  \begin{bmatrix}
    \Delta x_{init}^{1} & \Delta x_{g}^{1} & f_1   \\
    \vdots & \vdots & \vdots \\
    \Delta x_{init}^{N} & \Delta x_{g}^{N} & f_N
  \end{bmatrix}
\end{equation}
where $\Delta x_{init}^n = x_{init} - x_n$ and $ \Delta x_{g}^n = x_{goal}-x_n$  and $f_n$ is a feature indicator for node $n$ that takes value zero or one depending on if the node is in free space or inside an obstacle.
We are free to design feature vector $x$ as we wish. 
Permuting the rows of $\bbx$ represents a new motion planning problem where the obstacles have moved, similar to Fig \ref{fig:invarfig}. 
Formally, define a set of permutation matrices of dimension ${N}$ such that $\bm{\mathcal{P}} = \{\mathbf{{P}}\in \{0,1\}^{{N} \times {N}}\    \mathbf{{P1=1}},\mathbf{{P}^\top1 =1}\}$. 
Such a  permutation matrix $\mathbf{P}$ is one for which the product $\mathbf{P}^{\top}\bbx$ reorders the entries of any $\bbx$ and the operation $\mathbf{P}^{\top}\bbS\mathbf{P}$ reorders the rows and columns of any given $\bbS$. Let the permuted feature vector $\tilde{\bbx}= \bbP^\top\bbx$ represent a new motion planning problem $\widetilde{\mathbf{M}}=\{\tilde{\mathcal{X}}_{free},x_{init},x_{goal} \}$ produced by permuting the position of the obstacles but with the same start and goal. 
It is important to note that \textit{not all possible motion planning problems can be represented as a row permutation of $\mathbf{x}$ but nevertheless there exist a large class of motion planning problems that can be represented by permuting the rows of $\mathbf{x}$}. 
Let the optimal GNN filter weights after training on $\bbM$ be given by $\theta^*$ and the optimal filter weights for $\widetilde{\bbM}$ be $\tilde{\theta}^*$. Then:
\begin{theorem}
\label{thm:invariancetheorem}
The optimal GNN filter weights $\theta^*$ for $\bbM$ defined on graph $\mathcal{G}$ with a feature vector $\bbx$ and optimal GNN filter weights $\tilde{\theta}^*$ for $\widetilde{\bbM}$ also defined on graph $\mathcal{G}$ with a feature vector $\tilde{\bbx}$ are equivalent; 
\begin{equation}
    \theta^* \equiv \tilde{\theta}^*
\end{equation}
\end{theorem}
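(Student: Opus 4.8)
The plan is to reduce Theorem~\ref{thm:invariancetheorem} to the permutation equivariance of graph convolutional filters — the property of Gama et al.\ invoked above — and then to promote equivariance of the GNN \emph{map} to equivalence of its \emph{optimal weights} by checking that the entire supervised problem (inputs, architecture, and ground-truth labels) transforms consistently when the nodes of the fixed graph $\ccalG$ are relabelled.

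First I would write $\pi_{\theta}$ out explicitly as the composition of the $L$ graph-convolutional layers in \eqref{eqn_gnn_recursion}, followed by the graph maxpool and the FCN readout, and abbreviate the resulting input--output map by $\Phi(\bbS,\bbx;\theta)$, with $\theta$ collecting the filter coefficients $\{h_{\ell k}\}$ together with the readout weights. Next I would record the elementary identity $(\bbP^\top\bbS\bbP)^k=\bbP^\top\bbS^k\bbP$ for every $\bbP\in\bm{\mathcal{P}}$, immediate from $\bbP\bbP^\top=\bbI$ and induction on $k$; substituting it into \eqref{eq:z} yields $\bbH(\bbP^\top\bbS\bbP)(\bbP^\top\bbx)=\bbP^\top\bbH(\bbS)\bbx$, i.e.\ a single graph convolution commutes with node relabelling. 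Since the pointwise nonlinearity obeys $\sigma(\bbP^\top\bbz)=\bbP^\top\sigma(\bbz)$, each layer \eqref{eqn_gnn_recursion} is permutation equivariant, and composing equivariant maps preserves equivariance, so the stack of graph-convolutional layers sends $(\bbP^\top\bbS\bbP,\bbP^\top\bbx)$ to $\bbP^\top$ times its output on $(\bbS,\bbx)$. The graph maxpool aggregates over all nodes and is therefore permutation \emph{invariant}, which absorbs the leading $\bbP^\top$; hence $\Phi(\bbP^\top\bbS\bbP,\bbP^\top\bbx;\theta)=\Phi(\bbS,\bbx;\theta)$ for \emph{every} $\theta$.

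The second half is to transfer this to the two optimization problems. I would argue that relocating the obstacles to form $\widetilde{\bbM}$ amounts to relabelling the vertices of the Halton graph: the adjacency becomes $\bbP^\top\bbS\bbP$, which is the same $r$-disc graph $\ccalG$ with its nodes reindexed, and the feature matrix becomes $\tilde{\bbx}=\bbP^\top\bbx$ by construction of \eqref{eq:featurestate}. Because the Bottleneck-Node ground truth is computed from $\ccalG$ and $\bbM$ in a labelling-independent way, and because the maxpool has already collapsed the prediction to a single state $\hat{\bby}\in\ccalX$, the target $\bby_{g}$ is literally the same point for $\bbM$ and $\widetilde{\bbM}$. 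Therefore the objective $\theta\mapsto\|\Phi(\bbP^\top\bbS\bbP,\bbP^\top\bbx;\theta)-\bby_{g}\|=\|\Phi(\bbS,\bbx;\theta)-\bby_{g}\|$ is the \emph{same} function of $\theta$ in both problems, so the two $\argmin$ sets coincide and every minimizer $\theta^*$ of the first is a minimizer $\tilde{\theta}^*$ of the second, giving $\theta^*\equiv\tilde{\theta}^*$; even for the bare equivariant stack without the pool the conclusion survives, since permutations are orthogonal and hence norm-preserving, so a permuted label $\bbP^\top\bby_{g}$ yields the same residual norm.

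The step I expect to be the main obstacle is making the middle claim rigorous: pinning down precisely in what sense ``permuting the rows of $\bbx$'' produces a legitimate new planning problem still supported on the \emph{same} $\ccalG$, and verifying that the ground-truth labels transform under $\bbP$ exactly as the network's pre-pool activations do. This is where the caveat in the text — that only a subclass of planning problems is reachable by row permutations — earns its keep: one must restrict to permutations for which the reindexed point set, the $k$-nearest-neighbour rule, and the obstacle indicator remain mutually consistent, and then check that on that subclass the loss landscape is genuinely, not merely approximately, permutation-invariant. The remaining ingredients — the $(\bbP^\top\bbS\bbP)^k$ identity, the commutation with $\sigma$, and the invariance of the pool — are routine.
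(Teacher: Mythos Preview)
Your proposal is correct and in fact more complete than the paper's own argument, but the route differs in one structural way worth flagging. The paper keeps the shift operator $\bbS$ \emph{fixed} and permutes only the feature matrix, writing $\sum_k h_k \bbS^k(\bbP^\top\bbx)=\bbP^\top\sum_k h_k \bbS^k\bbx$ directly; that step silently requires $\bbS\bbP^\top=\bbP^\top\bbS$, i.e.\ that $\bbP$ be a graph automorphism of the Halton graph, which the paper never states. You instead permute \emph{both} $\bbS$ and $\bbx$ (a relabelling), use the clean identity $(\bbP^\top\bbS\bbP)^k=\bbP^\top\bbS^k\bbP$, and then argue that the relabelled graph is still ``the same $\ccalG$''. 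This shifts the delicate part from the algebra to the modelling question you correctly isolate at the end: which permutations simultaneously preserve the $r$-disc structure and yield a genuinely different obstacle layout. Your treatment is also strictly richer in scope: the paper's derivation stops at a single graph-filter layer (essentially Eq.~\eqref{eqn_layer_1}) and jumps from equivariance of the map to equivalence of the optimal weights without touching the nonlinearity, the multi-layer recursion, the maxpool, or the loss; you handle all of these explicitly, and your observation that the node-wise pool converts equivariance into invariance (so the objective is literally the same function of $\theta$) is the step that actually licenses $\argmin$ equality, which the paper asserts but does not justify.
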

\begin{proof}
We must show the motion planning problem and the GNN parametrizations are permutation equivariant. Proving the motion planning problem is permutation equivariant is trivial since by choice of construction $\tilde{\bbx} = \bbP^\top\bbx$ \footnote{
If we also permute $x_{init}$ and $x_{goal}$, because the feature vector $\bbx$ uses relative rather than absolute positions, $\tilde{\bbx} \approx \bbP^\top \bbx $
}. 

To prove the GNN parametrizations are also permutation equivariant, consider the output of the GNN filter with parameters for the permuted motion planning problem $\widetilde{\bbM}$ with feature vector $\tilde \bbx$ according to Eqn \ref {eqn_layer_1}:
\begin{equation}
    \pi_{\theta}(\ccalG,\tilde{\bbx},\widetilde{\bbM}) 
             = \bigg[ \sum_{k=0}^{K} h_{k} \mathbf{S}^k \tilde{\mathbf{x}}\bigg].
\end{equation}
which can be written as: 
\begin{equation}
\label{eq:permuinvariancebig}
\begin{split}
\bigg[ \sum_{k=0}^{K} h_{k} \mathbf{S}^k \tilde{\mathbf{x}}\bigg] = \bigg[ \sum_{k=0}^{K} h_{k} \mathbf{S}^k \bbP^\top \mathbf{x}\bigg] \\
= \bigg[ \bbP^\top \sum_{k=0}^{K} h_{k} \mathbf{S}^k  \mathbf{x}\bigg] 
 \\
= \bbP^\top  \pi_{\theta}(\ccalG,\bbx,\bbM)
\end{split}
\end{equation}
Intuitively, reordering obstacles in configuration space and reordering rows in $\bbx$ appropriately reorders filter outputs without changing policy weights. Thus, $\theta^* \equiv \tilde{\theta}^*$.
\end{proof}
 
Another consequence of Theorem \ref{thm:invariancetheorem} and Eqn. \ref{eq:permuinvariancebig} is that when $\pi_\theta$ is parametrized by a GNN it offers rotational invariance as well as translation invariance (this can be seen in Fig \ref{fig:invarfig}) as compared to CNNs which only offer translation invariance, helping GNN parametrizations of motion planning problems generalize to obstacles not seen in training better than CNNs and FCNs.

\subsection{Identifying Critical Samples}\label{subsec:idencritical}
We consider a two-dimensional space with all points between (0,0) and (1,1); randomly generated walls result in narrow corridors (Fig \ref{fig:bnresultsfig}).
We cover this space with a $r$-disc Halton graph with vertices in $\mathbb{R}^2$, each equipped with a feature vector. A constant graph with 2000 vertices is dense enough to cover any planning problem in this space.

%

\begin{figure*}[t]
  \centering
  \includegraphics[height=2.9cm,width=\linewidth]{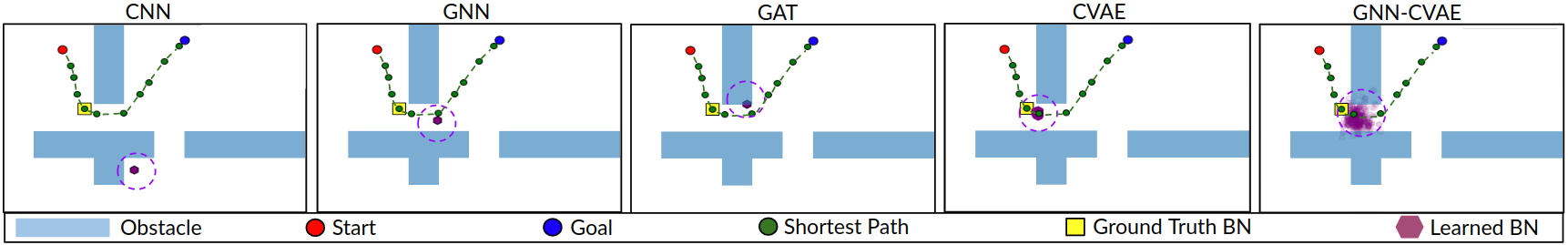}
  \includegraphics[height=2.8cm,width=\linewidth]{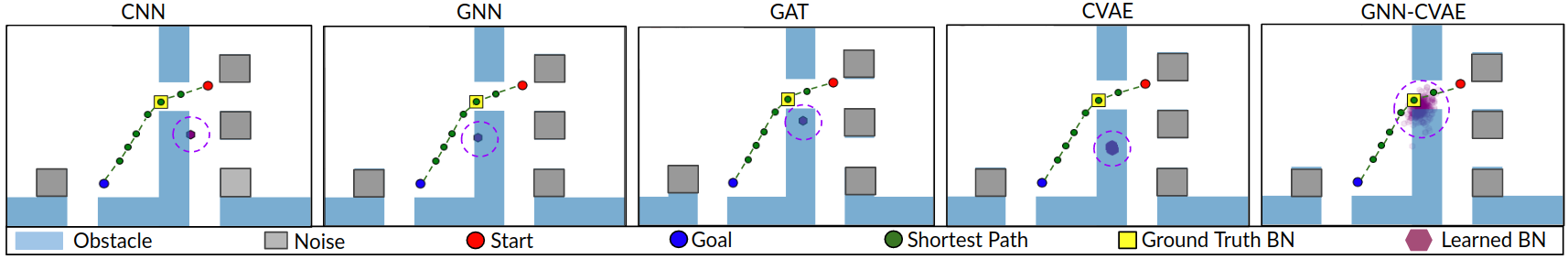}
  \caption{\textbf{Learning Critical Samples.} 
  \textit{Top row:} 
  GNNs and GATs predict critical nodes much more accurately than CNNs. CVAE predicts a distribution concentrated at one point, whereas GNN-CVAE represents the passageway more accurately.
  \textit{Bottom row:} On maps corrupted with random blobs unseen during training, only GNN-CVAE predicts critical nodes well.
  \label{fig:bnresultsfig}}

\end{figure*}

We computed bottleneck nodes for 400 motion planning problems with randomly generated maps, starts and goals. 
For a given motion planning problem $\mathbf{M}$, if the ground truth algorithm generates more than one bottleneck node, each can serve as a training label for $\mathbf{M}$. The resulting dataset is 15000 samples, split into train/validate/test sets. 

Fig \ref{fig:bnresultsfig}, top row, illustrates our qualitative findings. In most cases CNNs transfer poorly to environments significantly different from the training set. GNNs and GATs can handle multiple wall scenarios, producing critical nodes close to the ground truth. The CVAE tends to be more robust than the CNN, but it predicts a tight distribution of critical nodes concentrated around a single point. On the other hand GNN-CVAE paints a richer picture of the passageway: most of its probability mass lies close to the bottleneck node (Fig \ref{fig:bnresultsfig}).

We initially hypothesized planning on graphs with GNNs could outperform CNN/FCN architectures since the graph representation captures a richer picture of the planning problem and would be more robust to changes in the planning space. For example, a single pixel change can  fool CNNs \cite{su2019one}, whereas planning algorithms for realistic environments should be robust to small environment perturbations.  

\begin{figure}[hbt!]
    \includegraphics[width=\linewidth]{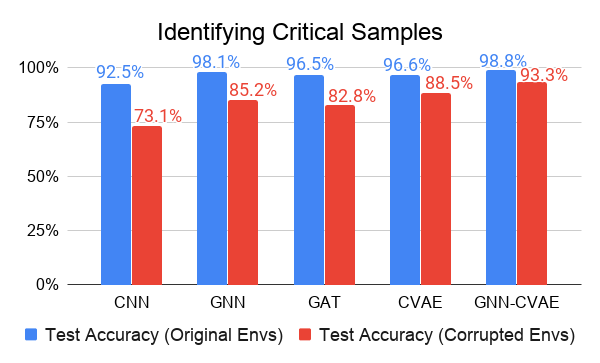}
  \caption{\textbf{Identifying Critical Samples.} Test accuracy averaged over 300 distinct planning problems.}\label{fig:bnacc}
\end{figure}

To evaluate robustness, we test on environments corrupted by random blobs not present in training (Fig \ref{fig:bnresultsfig}, bottom row). 
Only the GNN-CVAE can accurately predict a distribution centered around the ground truth node.
Fig \ref{fig:bnacc} shows accuracy as the complement of the mean squared error between predicted $\hat{\bby}$ and ground truth $\mathbf{y}$. 
We conclude
GNNs and GNN-CVAEs offer qualitative and quantitative advantages over CNN/FCN architectures for identifying critical nodes. 


\section{Learning Sampling Distributions with GNNs}
In Sec \ref{subsec:idencritical}
we operate under the assumption that the planning space can be discretized and be covered by a dense graph. 
The discretization assumption breaks for robots with kinodynamic constraintsm and the graph $\mathcal{G}$ covering the planning space $\mathcal{X}\in \mathbb{R}^d$ can require a very large number of nodes for dense coverage if d increases by even one or two dimensions.
To overcome these challenges, we adapt our GNN solutions to Sampling-based Planners (SBPs) such as Rapidly-exploring Random Trees (RRTs) which build an online graph by directly (uniformly) sampling the C-space. To study a problem with kinodynamic constraints that is non-trivial to discretize, we move away from 2D planar navigation to consider a pendulum and a six DoF robot arm.
\subsection{Pendulum Task}
The pendulum starts at the bottom with a given angle and velocity as goal. The pendulum is control limited and must plan a path that increases energy until the goal is reached. System state is $x=[\theta_p,\omega]$ where $\theta_p$ is the angle of the pendulum with the horizontal and $\omega$ is the angular velocity about the pendulum's center of mass. If samples $[x_1,x_2,\ldots] \sim \mu$ in the SBP are drawn from a distribution $\mu$ and the search algorithm $\mathbf{A}(\mathbf{M},x_1,x_2,\ldots)$ is extended to be a function of $\mathbf{M}$ as well as the samples ($x_1,x_2,\ldots)$, then the problem considered in this section is: 
\begin{problem}
\label{prob:learningdbn}
For planning problem $\mathbf{M}$ and SBP search algorithm $\mathbf{A}$ which samples states $[x_1,x_2,\ldots]$ from a distribution parameterized by $\mu$, select $\mu$ such that the cost $c$ of running
$\mathbf{A}(\mathbf{M},x_1,x_2,\ldots)$ is minimized:
\begin{equation}
\mu^* = \argmin_{\mu} \mathbb{E}_{\{\mathbf{M},\mu\}} \Big[c(\mathbf{A}(\mathbf{M},x_1,x_2,\ldots))\Big]
\end{equation}
\end{problem}

\begin{algorithm}[h!]  %
\begin{algorithmic}[1]
\State \textbf{Offline:}
\State Initialize empty dataset D = \{\}
\For {time $t=[0,\ldots,T]$} 
\State Initialize planning problem $\mathbf{M} =\{\mathcal{X}_{\text{free}},x_{\text{init}},x_{\text{goal}}\}$
\State Use uniform sampler in RRT to compute path $\mathbf{p}$ 
\State Initialize graph $\mathcal{G} = x_{\text{init}}$
\For {$p$ in range (0,length($\mathbf{p}$)-1)}
\State Update graph $\mathcal{G}: =x_{\text{init}} \oplus \mathbf{p}[p] $
\State Compute $\mathbf{x}$ for all nodes currently in $\mathcal{G}$.
\State Generate label $\mathbf{y}=\mathbf{p}[p+1]$
\State {Add to dataset D := \{$\mathcal{G},\mathbf{x},\mathbf{y}$\}}
\EndFor
\EndFor
\State Train parameters of sampling distribution $\mu(\mathcal{G},\mathbf{x})$.
\State \textbf{Online}:
\State Initialize new planning problem $\mathbf{M}$. 
\State Randomly sample points $s$ near $x_{\text{init}}$
\State Construct initial graph $\mathcal{G} := x_{\text{init}} \oplus s$
\State  Generate $\mathbf{x}$ for all nodes in current graph
\State  Predict next node $\hat{\mathbf{y}}$ to expand from $\mu(\mathcal{G},\mathbf{x})$ 
\While {$\hat{\mathbf{y}} \neq x_{\text{goal}}$}
\State Find node in $\mathcal{G}$ closest to $\hat{\mathbf{y}}$ and generate edge $e$ by integrating through dynamics of system 
\State Update graph $\mathcal{G} := \mathcal{G} \oplus
\mathbf{\hat{y}} $
\State Recompute $\mathbf{x}$ for updated graph
\State Predict new $\hat{\mathbf{y}}$ from $\mu(\mathcal{G},\mathbf{x})$
\EndWhile
\end{algorithmic}
\caption{Learning Sampling Distributions with GNNs.}\label{algo_critical}
\end{algorithm}

For a given $\mathbf{M}$, the graph $\mathcal{G}$ is initialized with the start node $x_{init}$ and a collection of $m$-nearest nodes connected to the start. Nodes are added to the graph from the sampler $\mu$. In this section, the graph $\mathcal{G}$ is \textbf{\textit{not constant}}
and is instead constructed as the RRT tree expands. This  increases the difficulty of the learning problem because the GNNs must learn to infer the topology of the problem through a time varying graph. Node features $n$ are given as $\mathbf{x}_n=[\Delta x_{init},\Delta x_{goal}]$ where $\Delta x_{init}= x_init - x_n$ and $\Delta x_{goal}= x_{goal} - x_n$ where $x_n$ represents node $n$ in configuration space.

For this problem, we only consider GNN, CVAE and GNN-CVAE architectures since there is no meaningful 2D representation for a CNN. Algorithm \ref{algo_critical} describes dataset collection, model training, and inference with the model.  Fig \ref{fig:rrtfig} shows generative architectures CVAE/GNN-CVAE offer a smaller benefit for this online planning case compared to GNNs. All three architectures improve upon uniform sampling by expanding fewer nodes and edges to produce similar solutions. For SBPs on the pendulum task, learning sampling distributions with GNNs offers substantial benefits over uniform samplers and FCN architectures.

\begin{figure}[t]
  \includegraphics[width=\linewidth]{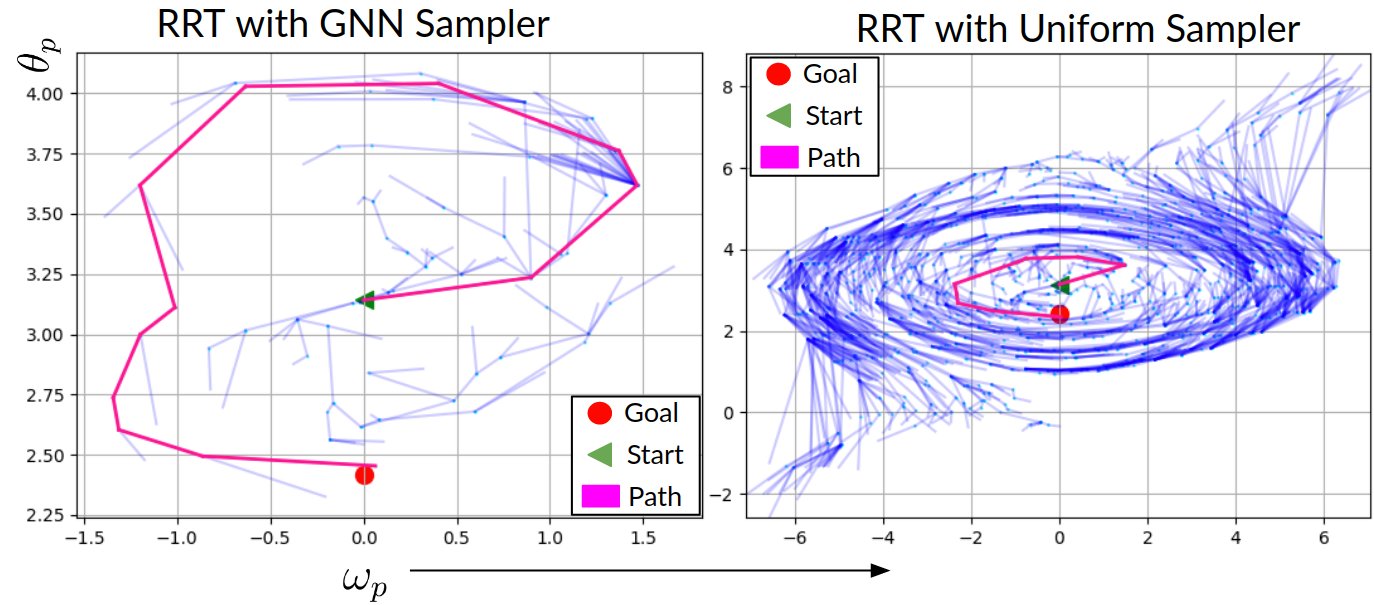}  
  \includegraphics[width=\linewidth]{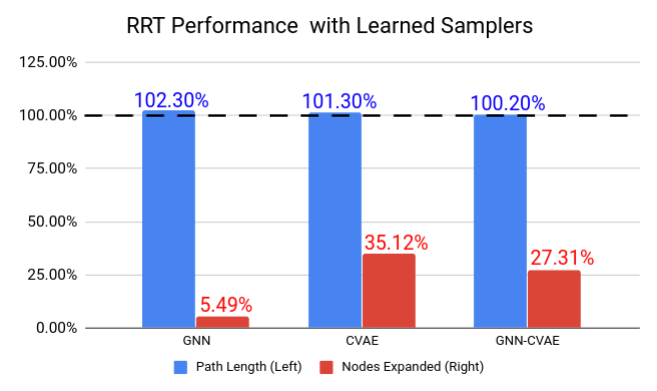}  
  \caption{\textbf{RRT with Learned Samplers for Pendulum} a) The GNN sampler expands fewer nodes than uniform sampling. b) Learned samplers show similar path lengths to uniform sampling but fewer expanded nodes.}
  \label{fig:rrtfig}
\end{figure}

\begin{figure}[t]
  \includegraphics[width=\linewidth]{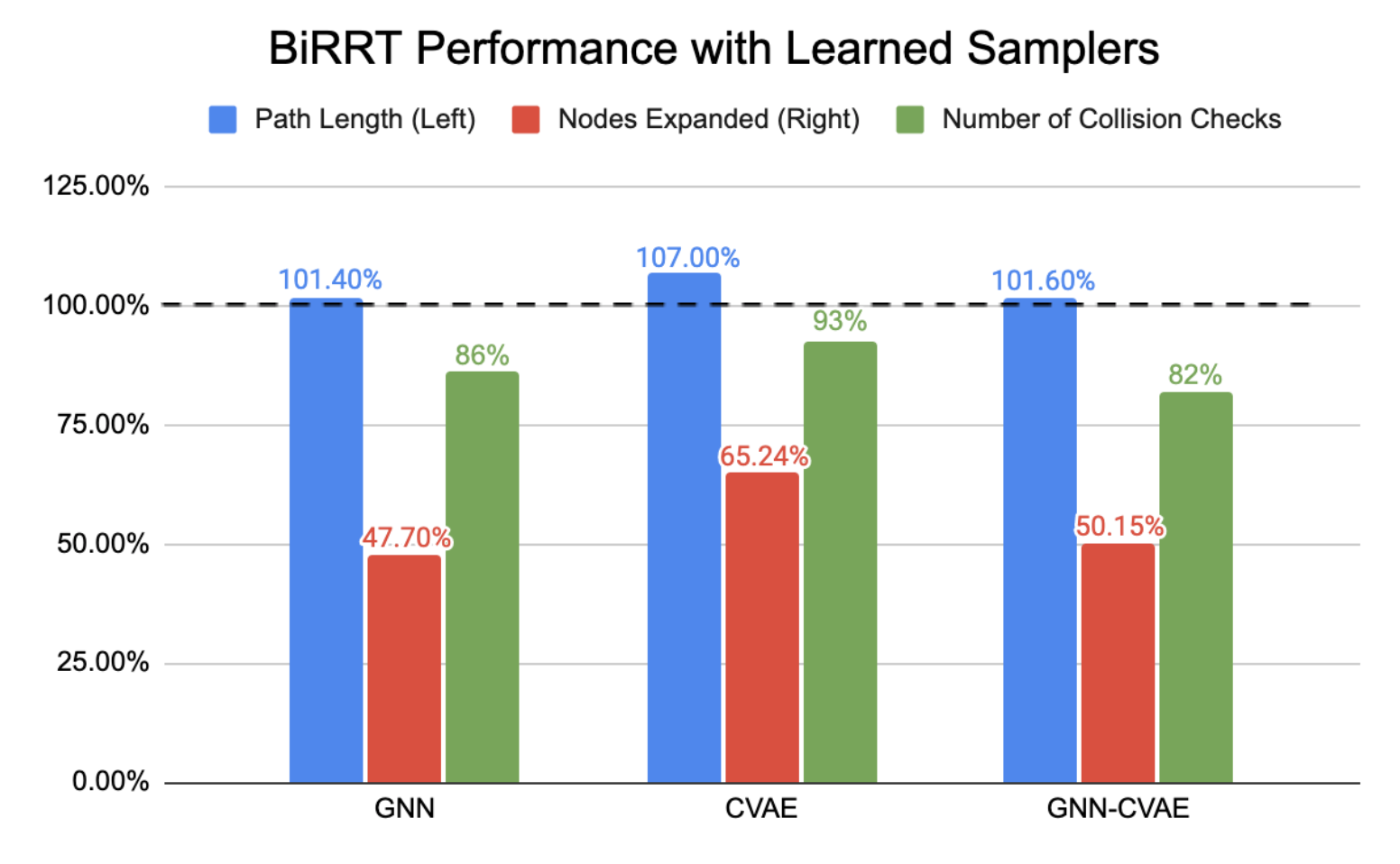}
  \caption{\textbf{Learned samplers for 6 DoF robot arm.} }
  \label{fig:birrt}
\end{figure}

\subsection{6 Degrees of Freedom Arm}
To validate GNN samplers for higher-dimensional planning, we test Algorithm \ref{algo_critical} on a simulated robot arm with six degrees of freedom which attempts to reach goals on tabletop environments with crevices and obstacles that block the arm's path. Vanilla RRT plans too slowly for this task, so we instead use Bi-directional RRT \cite{kuffner2000rrt} (BiRRT), which constructs two trees simultaneously from $x_{init}$ and $x_{goal}$, using a greedy heuristic which tries to connect the two trees to find a path. In our learned solution, we replace the uniform sampler in the tree growth step with our GNN sampler; features $\mathbf{x}$ for each node are constructed as before. We train on two environments with different numbers and types of obstacles present, then test on a third distinct environment. Figure \ref{fig:birrt} shows all learned architectures improve upon vanilla BiRRT in nodes expanded as well as collision checks. The GNN and GNN CVAE beat the CVAE on all fronts. 


\section{Conclusion}
GNNs offer a natural way to express functions in graph-based planning and can be adapted to many motion planning problems with ease. Three GNN-based architectures are proposed for motion planning problems, GNN, GAT and GNN-CVAE, using a densely sampled static graph for low-dimensional problems and a dynamically sampled graph for high-dimensional problems with kinodynamic constraints. GNN-CVAEs substantially outperforms CNNs for learning critical samples while GNNs outperform more powerful generative models that use fully-connected networks for high-dimensional sampling-based planning.

While adapting learning methods to very high-dimensional problems remains difficult, we see many ways to extend these methods. State vectors for robots can be replaced by sensor readings such as lidar or camera images. Online sampling distribution can be adapted for long-range navigation by tree pruning. We leave these for future work.




\bibliographystyle{IEEEtran}
\bibliography{root}

\end{document}